\theoremstyle{remark}
\newtheorem{mylemma}{Lemma}
\begin{document}

\title{Reconfigurable Tendon-Driven Robots: Eliminating Inter-segmental Coupling via Independently Lockable Joints}
\author{Botao Lin, Shuang Song and Jiaole Wang
\thanks{This work was supported partly by National Key R\&D Program of China under Grant 2022YFB4703200, and partly by Talent Recruitment Project of Guangdong under Grant 2021QN02Y839, and in part by the Science Technology Innovation Committee of Shenzhen under Grant JCYJ20220818102408018 and GXWD20231129103418001.}
\thanks{B. Lin, S. Song, and J. Wang are with School of Mechanical Engineering and Automation, Harbin Institute of Technology (Shenzhen), Shenzhen, China.}
\thanks{$^*$Corresponding author: Jiaole Wang (wangjiaole@hit.edu.cn) and Shuang Song (songshuang@hit.edu.cn)}
}



\maketitle
\begin{abstract}
With a slender redundant body, the tendon-driven robot (TDR) has a large workspace and great maneuverability while working in complex environments.
TDR comprises multiple independently controlled robot segments, each with a set of driving tendons.
While increasing the number of robot segments enhances dexterity and expands the workspace, this structural expansion also introduces intensified inter-segmental coupling.
Therefore, achieving precise TDR control requires more complex models and additional motors.
This paper presents a reconfigurable tendon-driven robot (RTR) equipped with innovative lockable joints.
Each joint's state (locked/free) can be individually controlled through a pair of antagonistic tendons, and its structure eliminates the need for a continuous power supply to maintain the state.
Operators can selectively actuate the targeted robot segments, and this scheme fundamentally eliminates the inter-segmental coupling, thereby avoiding the requirement for complex coordinated control between segments.
The workspace of RTR has been simulated and compared with traditional TDRs' workspace, and RTR's advantages are further revealed.
The kinematics and statics models of the RTR have been derived and validation experiments have been conducted. 
Demonstrations have been performed using a seven-joint RTR prototype to show its reconfigurability and moving ability in complex environments with an actuator pack comprising only six motors.

\end{abstract}

\begin{IEEEkeywords}
Tendon-driven, continuum robots, lockable mechanism, underactuated robot
\end{IEEEkeywords}

\section{Introduction}
\IEEEPARstart{T}{endon-driven} robots (TDR) are widely used in many complex environments applications nowadays, including space station \cite{lastinger2020variable}, exploration of debris rescue \cite{wooten2018exploration}, minimally invasive surgery (MIS) \cite{dupont2022continuum,kim2019ferromagnetic,wang2019steering,song2021real,wang2021eccentric,kong2022dexterity}.
Since TDRs can be actuated remotely with actuators integrated out of the robot, TDRs have great maneuverability and large workspace while maintaining their slender size.
Typical TDRs consist of sequentially connected robot segments, with each segment having an independent set of control tendons.
Although increasing the number of robot segments can enhance TDR's flexibility and workspace, it inevitably introduces additional driving tendon sets and actuation motors \cite{endo2019super}.
Moreover, since the driving tendons of each segment pass through the segments in front of it, the increase in the driving tendon sets also leads to a rise in the coupling between robot segments during movements \cite{9420666,xu2018kinematics}, which necessitates more complex modeling and coordination of more motors for precise control of the TDR, such as learning-based control methods \cite{zhang2022survey,202200367ContinuumRobotsAnOverview}, increasing the control complexity.
Furthermore, the progressively expanding tendon set will prevent the robot from miniaturizing and adapting to some work scenarios with size requirements. 
Therefore, increasing the number of TDR controllable robot segments and reducing inter-segment coupling during movements is a long-standing challenging trade-off.

\begin{figure}[t]
\centering
\includegraphics[width=0.38\textwidth]{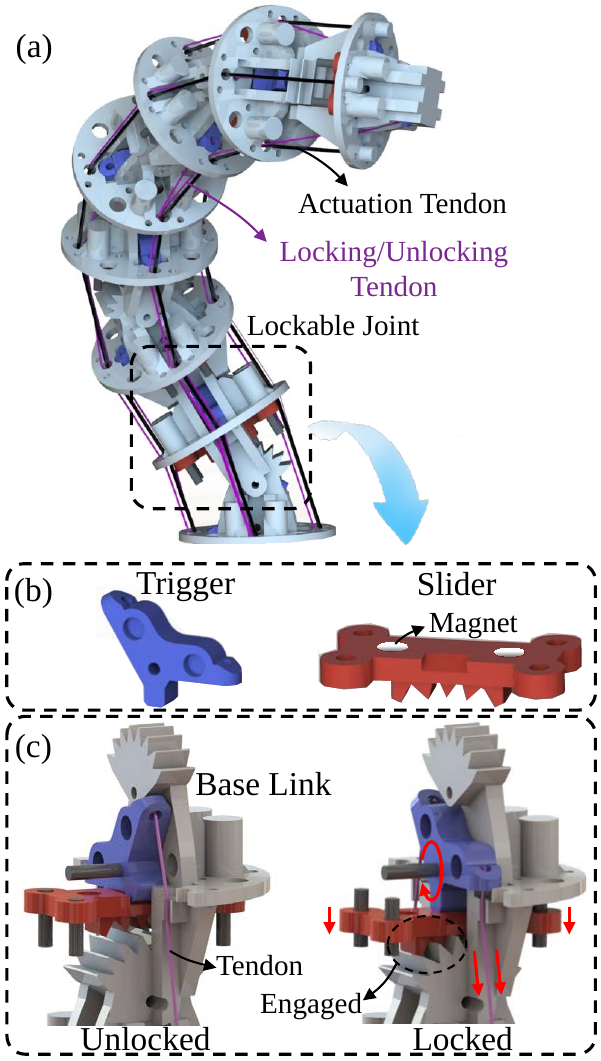}
\caption{Design of the RTR.
(a) The conceptual diagram of the RTR. 
The robot is composed of consecutively connected lockable joints.
One set of driving tendons passes through the whole robot, mounting on the distal joint.
Each lockable joint is connected with two tendons, used to switch the joint's state.
(b) The trigger and slider inside the lockable joint.
Two magnets are implanted in the slider.
(c) When the tendon pulls the trigger, it drives the slider downward, forming a dead-point configuration. 
In this state, the teeth on the slider engage with those on the below link, locking the joint stably.}
\label{fig:robot}
\end{figure}

In the past decade, researchers have proposed various methods for reducing the inter-segment coupling between TDR's segments and the control complexity of the robot, and have achieved many results.
Russo \textit{et al.} \cite{10606060} designed a helical tendon routing to decouple segments of TDR.
The helical tendon routing can maintain the length of the tendon unchanged when the backbone bending, benefiting both hardware and software.
Wang \textit{et al.} \cite{wang2019cable} proposed a hybrid modular cable routing method to decrease the total number of actuating tendons.
Some tendons are co-shared by adjacent joints, and the total number of actuating cables can be reduced to $n+1$ for an $n$ degree-of-freedom (DoF) TDR.
Jiang \textit{et al.} \cite{jiang2017new} designed an actuation pack with a tendon selector for the TDR.
The electromagnetic-clutch-based tendon selector realizes using one motor to drive different tendons steply in the procedure, reduces the number of actuators, and eases the complexity of the TDR control.
Because the control complexity of TDR originated from the strong coupling of driving tendons, reducing the complexity requires the reduction of tendons when adding movable segments, while the methods mentioned above have limited effect. 
%
%

Enabling TDRs to independently lock each robot segment is another emerging approach for reducing inter-segmental couping during TDR's movements.
During each step in the operation cycle, users only actuate the targeted segment to move and lock the shape of other segments, achieving inter-segmental decoupling.
Many locking methods have been investigated, including utilizing phase-changeable material, shape memory material, jamming, lockable mechanisms, etc., aiming for independent locking of robot segments.
Bishop \textit{et al.} \cite{bishop2022novel} proposed a shape-memory-alloy (SMA) based clutch for locking TDR.
Equipping the clutches, the TDR can achieve decoupled motions between robot segments by locking the targeted segments and actuating the rest parts. 
The work also demonstrated the partial locking function enlarges TDR's workspace significantly.
Wockenfuss \textit{et al.} \cite{wockenfuss2022design} employed granular jamming for locking the shape of TDR.
With each robot segment filled with granules, applying negative air pressure enables a sharp increase of the friction between granules and stiffens the targeted robot segment, then the rest part of TDR can deform separately.
Le \textit{et al.} \cite{le2020temperature} proposed a TDR with a phase-changeable PET tube inside each robot segment.
By heating/cooling the tube, users can regulate the stiffness of the robot segment and activate the targeted segment independently.  
Firouzeh \textit{et al.} \cite{firouzeh2015under} and Yan \textit{et al.} \cite{yan2021towards} implant shape memory materials inside TDR's joints.
by tuning the temperature with an embedded heater, each joint's locking state can be switched individually.



\begin{table}[t]
\centering
\caption{Comparison between underactuated TDR, overactuated TDR, and RTR.}
\begin{tabular}{cccc}
\toprule
Robot Type        & Control DoF & Workspace & \begin{tabular}[c]{@{}c@{}}Inter-segmental \\ Coupling\end{tabular} \\
\midrule
Underactuated TDR & Few         & Small     & Weak                                                               \\
Overactuated TDR  & Numerous    & Large     & Strong                                                             \\
RTR               & Few         & Maximum   & Weak\\
\bottomrule
\end{tabular}
\label{lab: benchmark}
\end{table}

Our previous work \cite{botao2022lockable} proposed a modular lockable joint has been proposed for TDR, and an early exploration of how the lockable joint can affect TDR's performance has been carried out.
In this paper, we propose a reconfigurable underactuated tendon-driven continuum robot (RTR) with novel mechanical lockable joints. 
Compared with the previous design, the novel lockable joint has a more reliable locking performance, and its locking/unlocking action can be actuated more conveniently and stably.
The RTR employs an innovative architecture featuring a compact tendon driving module that connects to the distal end-effector via only one set of driving tendons. 
Inter-segmental motion coupling issues observed in conventional TDRs caused by driving tendons are fundamentally eliminated in the RTR.
During motion execution, only target joints are activated while non-target joints remain at fixed angular positions via mechanical locking mechanisms. 
This time-phased actuation strategy allows the RTR to be easily controlled. 
Additionally, it enables users to dynamically reconfigure the robot by adjusting activated joint count and spatial distribution during each motion step for varying working scenarios.
A comparison of the key features of underactuated TDR, overactuated TDR, and RTR is presented in Table \ref{lab: benchmark}.
The main contributions of this paper can be listed as follows:
\begin{enumerate}
    \item A novel RTR with lockable joints is proposed, and the corresponding six-motor actuation pack is designed.
    \item The kinematics and statics of the RTR are modeled, a detailed analysis of the workspace and dexterity of RTR has been given, and the relationship between the robot's dexterity and configuration is shown.
    \item Experimental validations for verifying kinematics and statics models have been carried out, and the working performance of the RTR has been successfully demonstrated.
\end{enumerate}

\section{Design of RTR and Motion Strategy}

\subsection{Design of the RTR System}

\begin{figure*}
    \centering
    \includegraphics[width=0.9\textwidth]{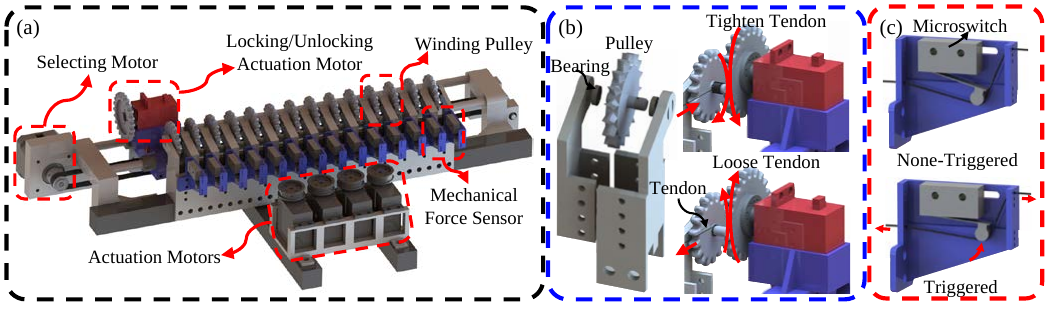}
    \caption{The design of the six-motor actuation module.
    (a) The overview of the actuation module.
    Four motors are placed in the front of the module, controlling the robot to bend.
    Two motors are placed behind, selecting and switching the states of the targeted joints. 
    The modular tendon-winding pulleys are arranged in rows, with their number adjustable to match the joint count of the RTR.
    (b) The working principle of the winding pulley. 
    A tendon is winded on the shaft of the pulley.
    Each pulley has teeth that can mesh with the gears on the locking/unlocking actuation motor.
    When the motor moves into alignment and engages with the target pulley, its rotation tightens or loosens the tendon, thereby enabling the locking or unlocking of the front joint.
    (c) Working principle of the mechanical force sensor. 
    A microswitch is embedded within the sensor, with the tendon passing through it. 
    When the tendon experiences sufficient tensile force, the switch is triggered, closing the circuit and generating a signal.}
    \label{fig:actuationmodule}
\end{figure*}

The conceptual diagram of the RTR is shown in Fig. \ref{fig:robot} (a), which is composed of lockable joints alternatively, enabling the RTR to move spatially.
A set of four driving tendons goes through all joints and mounts on the distal end.
Three parts compose each lockable joint, including a base link, an asymmetric trigger, and a slider, as shown in Fig.\ref{fig:robot}(b) and (c). 
There are tendons attached to both sides of the trigger, which are used to drive the trigger to rotate around the shaft in the opposite direction.

The movement of the slider is restricted by four columns and can only move up and down along the central axis of the base link.
The head of the base link has teeth that can mesh with those on the bottom of the slider.
When the asymmetric trigger rotates and pushes the slider to go down, the slider meshes with the head of the below base link, then the joint angle between the two links is locked.
When the joint is locked, the bottom of the trigger is in surface contact with the top of the slider, and the direction of the force on this plane will pass through the rotation axis of the trigger, forming a dead-point configuration. 
Therefore, the locking of the joint will be quite stable within the tolerance of the material.
Magnets are implanted in both the slider and the base link. When the trigger is rotated to release the dead-point configuration, the slider disengages from the below link under the influence of the magnetic force, and then the joint in unlocked.

The design of the proposed six-motor actuation pack can be seen in Fig. \ref{fig:actuationmodule}. 
The six motors in the pack enable the robot to bend spatially while simultaneously selecting and triggering the locking or unlocking of the targeted joints.
Four motors on the front of the pack are actuation motors, which are used to drive the robot to bend.
Behind the actuation module is a row of winding pulleys, a belt-driven rail, and two motors.
A tendon is wrapped around each pulley and connected to one side of the trigger in a lockable joint, used for actuating the front joint to lock or unlock.
One motor drives the belt-driven rail to align the slider with the targeted pulley, while the other is mounted on the slider to rotate the pulley to wind or release the tendons, as shown in Fig \ref{fig:actuationmodule}.
In the front of each pulley is a force sensor made by a mechanical microswitch.
When the tendon winding on the pulley successfully locks or unlocks the front joint, further retracting the tendon will cause the microswitch to close. 
The signal sent from the switch activates the motor to rotate in the opposite direction, allowing the tendon to loosen and ensuring it will not couple with the subsequent movements of the RTR.


\subsection{Motion Strategy}
Due to the dead-point configuration formed during mechanical locking, it remains locked without the need for external energy input to maintain its state.
Therefore, the actuation module employs a time-phased strategy to sequentially unlock the target joints at each step of the movement, and then the free joints rotate under the actuation of the internal driving tendons.
During movement, the locked joints can be considered as rigid bodies connected to the free joints, which do not interfere with the robot's control.
After completing the current movement step, the actuation module sequentially locks the free joints while the driving tendons are kept tight.
Since the force required for locking/unlocking is relatively small compared to the tension of the driving tendons, the impact of switching joint states on the robot's posture can be considered negligible.
Repeating the above steps, the RTR gradually reconfigures itself and transforms its posture toward the target shape, driven by only six motors throughout the entire movement process.


\section{Modeling}
In our previous work \cite{btLinRobio}, RTR's static model with all joints locked was derived.
In this section, we extend this model to a more general form.  
Based on the rigid body assumption of all links, this general model can be used to calculate the robot’s configuration with external forces under any joint locking state.

\subsection{Static Modeling}
The joints of the RTR can be classified into distal joints and intermediate joints, which have different force analysis considerations during the robot's static analysis, as shown in Fig. \ref{fig:jointmechanics}.  

\subsubsection{The Distal Joint}
\begin{figure}[ht]
    \centering
    \includegraphics[width=0.45\textwidth]{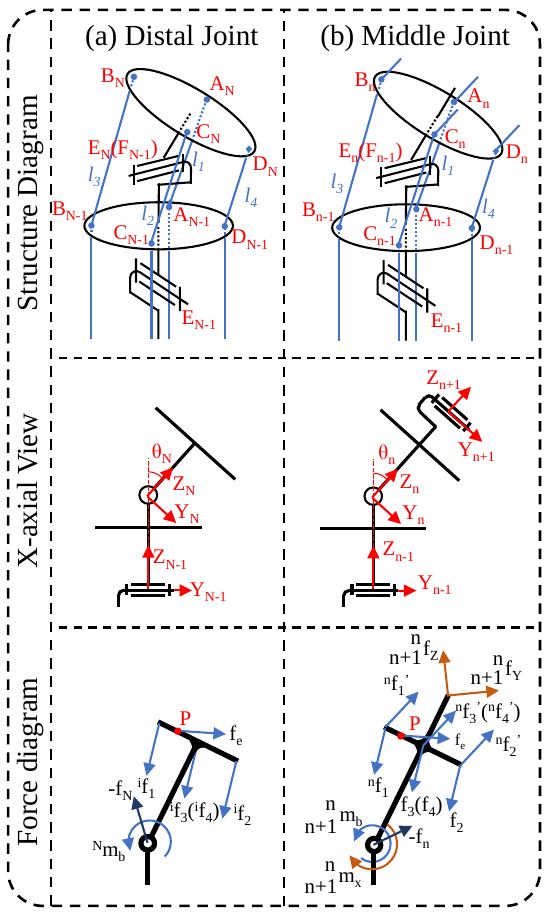}
    \caption{Static analysis of RTR's joints.
    (a) The structure diagram, X-axial view, and the force diagram of the distal joint of the robot. 
    (b) The structure diagram, X-axial view, and the force diagram of any intermediate joint of the robot.}
    \label{fig:jointmechanics} 
\end{figure}
Let the $N^{th}$ joint be the distal one.
As shown in Fig. \ref{fig:jointmechanics}(a), A, B, C, and D are the holes for tendons passing through, and E and F are points representing the upper and lower joints, respectively.
Four driving tendons $l_1, l_2, l_3, l_4$, and local coordinates are also shown, and the positive direction of the X, Y, and Z axis of each local coordinate are from D to B, A to C, and F to E, respectively.
Forces generated by tendons on the $N^{th}$ link are $\mathbf{f}_1$, $\mathbf{f}_2$, $\mathbf{f}_3$, $\mathbf{f}_4$.
The external force $\mathbf{f}_e$ acts on point P.
The resilient torque of the backbone in the $N^{th}$ joint is $^N\mathbf{m}_{b}$, which can be regarded as the torque generated by a torsional spring whose equivalent torsional constant is $K_N$.
It is noted that $K_N$ is a variable that depends on the locking state of the joint.

The $N^{th}$ joint rotates around X-axis, and its rotation angle $\theta_N$ can be derived as follow:
 Torque in the $N^{th}$ joint can be written as
\begin{align}
    &{}^N\mathbf{m} = \widehat{\overrightarrow{FA}}{}^{N}\mathbf{f}_1 + \widehat{\overrightarrow{FC}}{}^{N}\mathbf{f}_2 + \widehat{\overrightarrow{FB}}{}^{N}\mathbf{f}_3 + \widehat{\overrightarrow{FD}}{}^{N}\mathbf{f}_4 + \widehat{\overrightarrow{FP}}{}^{N}\mathbf{f}_e\,,
\end{align}
where $\widehat{\mathbf{x}}$ denotes a skew-symmetric matrix from vector $\mathbf{x}$.

The joint angle $\theta_N$ is depending on the component of ${}^N{\mathbf{m}}$ on the X-axis, which can be written as 
\begin{equation}
    -{}^N{\mathbf{m}_X} = {}^N\mathbf{m}_{b}=\mathbf{\theta}_N K_N \,.
\end{equation}

The resultant force vector in the $N^{th}$ local frame can be written as
\begin{align}
    {}^{N}\mathbf{f} = & 
    {}^{N}\mathbf{f}_1 + 
    {}^{N}\mathbf{f}_2 + 
    {}^{N}\mathbf{f}_3 +
    {}^{N}\mathbf{f}_4 + 
    {}^{N}\mathbf{f}_e\,,
\end{align}
These forces and torques will propagate to the intermediate joints, therefore, they should be expressed in the $(N-1)^{th}$ local frame as follows
\begin{align}
    & {}^{N-1}_N\mathbf{f} = {}^{N-1}_N\mathbf{R}{}^N\mathbf{f}\,, \\
    & {}^{N-1}_N\mathbf{m} = {}^{N-1}_N\mathbf{R}{}^N\mathbf{m}\,, \\
    &{}^{N-1}_{N}\mathbf{R}=Rotx(\theta_N)\,.
    \label{eq:matrix}
\end{align}
where $Rotx(\theta_N)$ denotes the rotation matrix around the X-axis with the angle of $\theta_N$. 
If the joint rotates around the Y-axis, the rotation matrix about $\theta_N$ can be written as   
\begin{equation}
    {}^{N-1}_N\mathbf{R} = Roty(\theta_N)\,.
    \label{eq:matrix2}
\end{equation}

\subsubsection{The Middle Joints}
Assuming the discussed joint in this part is the $n^{th}$ joint, it rotates around the X-axis.
$\theta_{n+1}$, $^{n}_{n+1}\mathbf{f}$, and ${}^{n}_{n+1}\mathbf{m}$ are obtained in the previous calculation.
The structure schematic and free body diagram of the $n^{th}$ joint are shown in Fig. \ref{fig:jointmechanics}(b).
Each tendon goes through the holes of the $n^{th}$ link and has a point contact with the link.
The contact forces $\mathbf{f}_{c1}$, $\mathbf{f}_{c2}$, $\mathbf{f}_{c3}$, $\mathbf{f}_{c4}$ can be calculated as 
\begin{align}
    \mathbf{f}_{ci} = {}^{n}\mathbf{f}_{i} + {}^{n}\mathbf{f}_{i}'\quad\left(i = 1,2,3,4\right)\,.
\end{align}
The resultant forces ${}^{n}\mathbf{f}$ and the component of torque ${}^{n}\mathbf{m}_X$ on the X-axis can be written as
\begin{align}
    {}^{n}\mathbf{f} &= \mathbf{f}_{c1}+ \mathbf{f}_{c2} + \mathbf{f}_{c3} + \mathbf{f}_{c4} + ^{n}_{n+1}\mathbf{f} + \mathbf{f}_e \,,\\
    {}^{n}\mathbf{m}_X &= {}^{n}_{n+1}\mathbf{m}_X + \widehat{\overrightarrow{FE}}{}^{n} \left( ^{n}_{n+1}\mathbf{f}_Y + ^{n}_{n+1}\mathbf{f}_Z\right)
    + \left(\widehat{\overrightarrow{FA}}{}^{n}\mathbf{f}_{c1} \right.  \nonumber \\
    & \left. + \widehat{\overrightarrow{FC}}{}^{n}\mathbf{f}_{c2}
    + \widehat{\overrightarrow{FB}}{}^{n}\mathbf{f}_{c3} + \widehat{\overrightarrow{FD}}{}^{n}\mathbf{f}_{c4} +
    \widehat{\overrightarrow{FP}}{}^{n}\mathbf{f}_e \right)_X \,,
\end{align}
where $^n_{n+1}\mathbf{f}_Y$ and $^n_{n+1}\mathbf{f}_Z$ are the components of $^{n}_{n+1}\mathbf{f}$ on Y- and Z-axis, respectively.
The joint angle $\theta_{n}$ can be calculated as follow
\begin{align}
 -{}^{n}\mathbf{m}_X = {}^{n}\mathbf{m}_b = \mathbf{\theta}_{n}K_{n}  \,.
\end{align}

Utilized the above method to sequentially calculate the rotation angles from the distal joint to the proximal joint, the configuration of RTR can be obtained.

\begin{table}[t]
\centering 
\caption{Verification: RTR follows the constant curvature model in ideal situation}
\begin{tabular}{cccccc}
    \toprule
    & Case 1 & & & Case 2 & \\
    \cmidrule(lr){1-3} \cmidrule(lr){4-6}
    Joint No. & State &Angle ($^\circ$) & Joint No. & State &Angle ($^\circ$)\\
    \cmidrule(lr){1-3} \cmidrule(lr){4-6}
    7 & Free & -19.43 & 7 & Free & -19.43 \\
    6 & Locked & 0 & 6 & Locked & 0 \\
    5 & Free & -19.43 & 5 & Locked & 16 \\
    4 & Locked & 0 & 4 & Locked & 0 \\
    3 & Locked & 16 & 3 & Locked & 16 \\
    2 & Locked & 0 & 2 & Locked & 0 \\
    1 & Free & -19.43 & 1 & Free & -19.43 \\
    \bottomrule
\end{tabular}
\label{tab:idealRTRsimulation}
\end{table}

In ideal situations, where the friction and payload are ignored, the simulation results show that the movement of spatial RTR follows the constant curvature model.
Two example cases are shown in Table \ref{tab:idealRTRsimulation}.
It can be seen that free joint angles remain the same during the movement.
While some middle joints are locked, the rest free joints in this segment have the same rotation angles during the following movements.
This result laid the foundation for the subsequent derivation of the RTR's forward kinematics.

\begin{figure}[t]
\centering
\includegraphics[width = 0.4\textwidth]{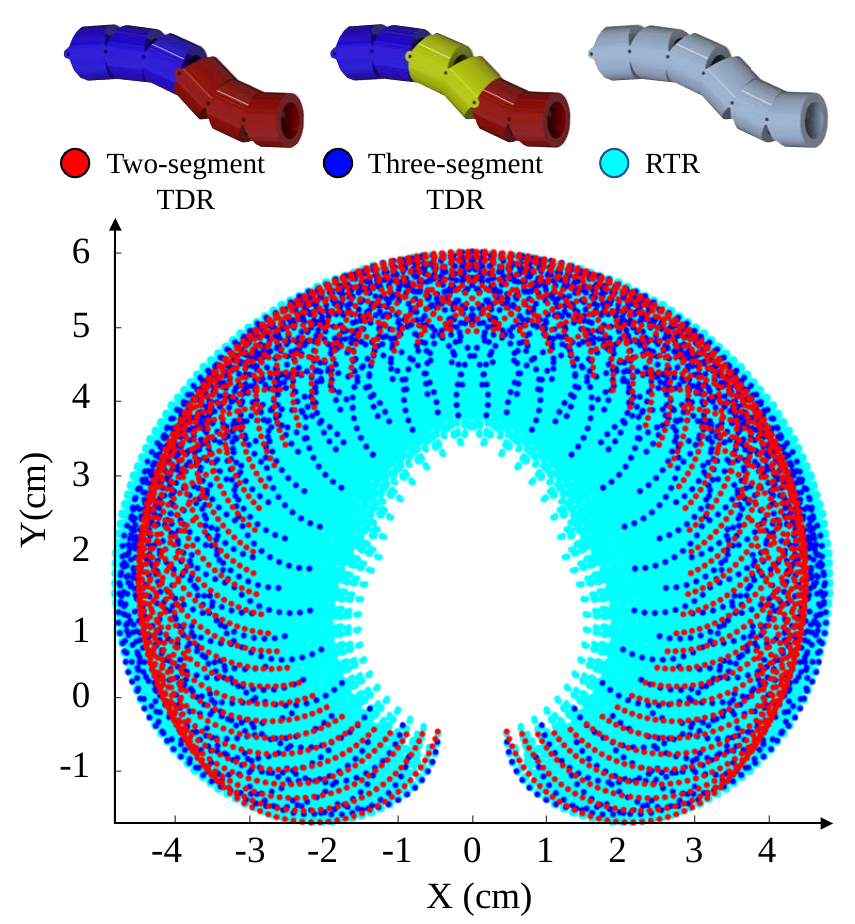}
\caption{workspace comparison between the two-segment TDR, three-segment TDR, and the RTR.
All robots are planar and consist of six joints.
The red, blue, and mint green points denote the reachable positions of the two-segment TDR, three-segment TDR, and the RTR. }
\label{fig:workspace}
\end{figure}
\subsection{Kinematic Modeling}
The kinematics of RTR, when no joints are locked, is identical to that of a traditional TDR.
In our previous work \cite{botao2022lockable}, we demonstrated that the movable part of the planar RTR can be approximated using the constant curvature model, which is commonly used to approximate the shape of TDRs \cite{dalvand2018analytical}.
When the RTR is expanded from a planar to a spatial configuration, it can bend into two perpendicular planes with four driving tendons.
The joints can be divided into two groups based on the rotation direction of their axes.
The driving tendons can also be divided into two groups, $l_1$ and $l_2$ form one group, while $l_3$ and $l_4$ form another.
Based on the derivation of the static model, the angle changes of each group of joints in the spatial RTR under ideal conditions can still be estimated by the constant curvature model.

Fig. \ref{fig:jointmechanics}(b) shows the geometry diagram of the intermediate joint.
When one joint angle is $\theta$, the length of the driving tendons corresponding to this joint can be written as
{\small 
\begin{align}
    &l_1 = \sqrt{(d_3+d_1\cos{\theta}+r\sin{\theta})^2 + (d_1\sin{\theta}-r\cos{\theta}+r)^2}\,,\\
    &l_2 = \sqrt{(d_3+d_1\cos{\theta}-r\sin{\theta})^2 + (d_1\sin{\theta}+r\cos{\theta}-r)^2}\,, \label{eq:l1}\\
    &l_3 = l_4 = \sqrt{d_1^2+d_3^2+2d_1d_3\cos{\theta}}\,.
\end{align}}
As the inverse problem, when $l_2$ is known, the $\theta$ can be found by follows
\begin{align}
    &\theta = \phi - \arcsin{\frac{l_2^2-m}{a}}\,, \label{eq:theta}\\
    &\phi = \arcsin{\frac{2d_1d_3-2r^2}{a}}\,,\\
    &m = d_1^2 + d_3^2 + 2r^2\,,\\
    &a = \sqrt{4d_1^2d_3^2+4r^4+4r^2d_3^2+4r^2d_1^2} \label{eq:a}\,.
\end{align}
    
During the movement, the number of locked joints and their locked angles can be obtained from the previous motion sequence, and angles of free joints can be obtained by the constant curvature model.
Then the posture of RTR can be obtained, and the $n^{th}$ joint's transformation matrix can be written as
\begin{align}
    {}_0^n\mathbf{T} = \prod_{i=1}^n \mathbf{T}(\theta_i) = 
    \begin{bmatrix}
        \mathbf{R}_n & \mathbf{p}_n \\
        \mathbf{0} & 1 
    \end{bmatrix}\,,
\end{align}
where $\mathbf{R}_n$ and $\mathbf{p}_n$ represent the orientation and position of the $n^{th}$ joint, respectively.

\subsection{Reachable Workspace}
With lockable joints, the RTR can reconfigure itself, dividing the joints into segments of varying lengths and shapes, thereby enabling different workspaces. 
Consequently, the whole workspace of RTR has a dramatic increase compared to the traditional TDRs with the same actuation DoFs.

\begin{table}[t]  
	\centering 
	\caption{The maximum $D_p$ of different RTR configurations}
	\begin{tabular}{ccccc}
		\toprule
		Number of division &3  & 4  & 5  & 6 \\
		\midrule
		Maximum $D_p$ (\%)&21.85 & 55.72 & 61.50 & 66.75\\
		\bottomrule
	\end{tabular}
	\label{tab:serviceregion}
\end{table}
\begin{figure*}[t]
\centering
	\includegraphics[width=0.8\textwidth]{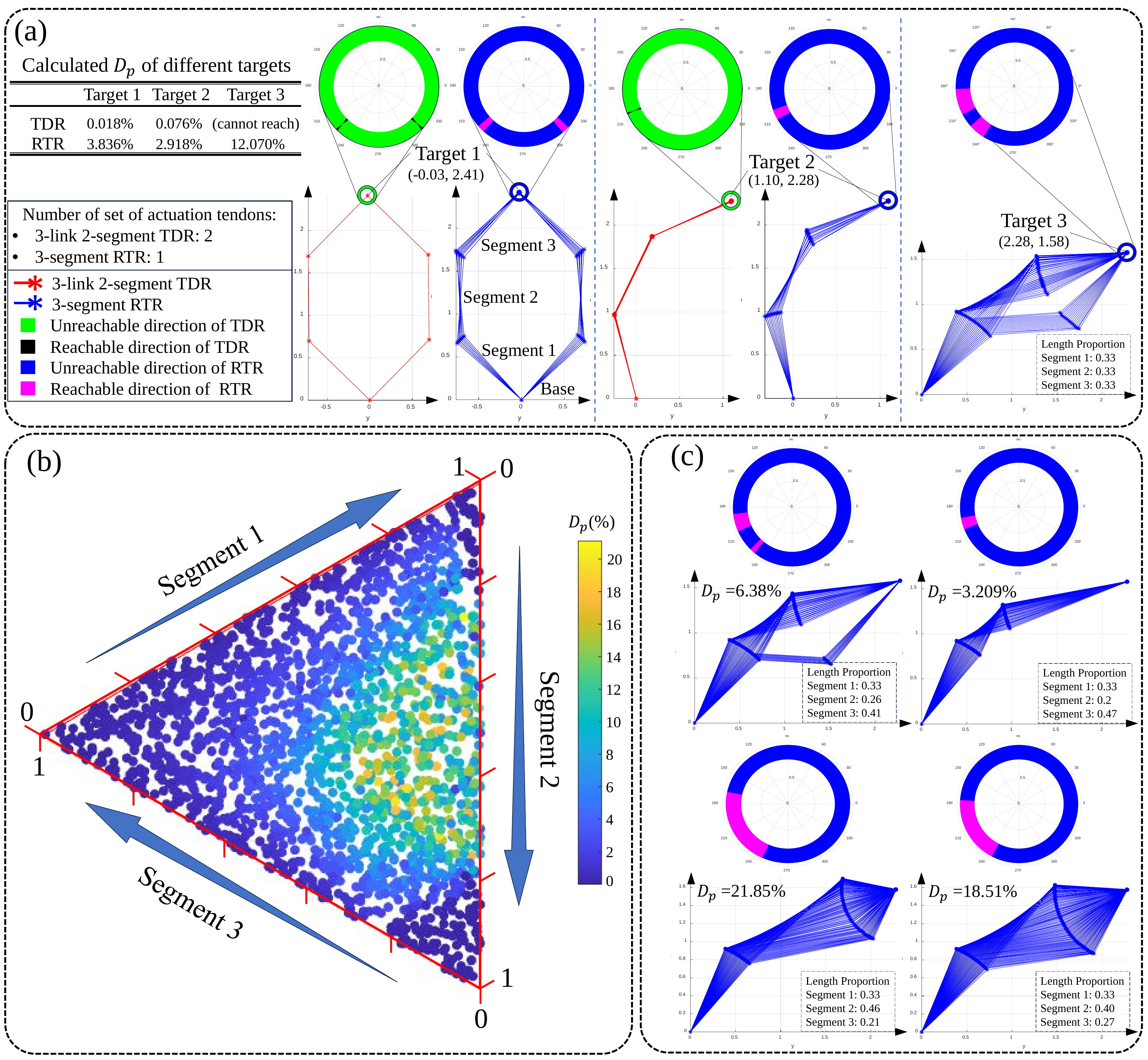}
	\caption{Dexterity analyses of RTR. 
		(a) Comparisons of dexterity of three different targets between 3-link 2-segment TDR and 3-division 1-segment RTR.
        (b) The dexterity map of 3-division 1-segment RTR.
        In the map, each point represents the dexterity of one robot's configuration that reaches the target 3, and the point's coordinates are the ratios of each division length to the whole length.
        (c) 
        Four exemplary points from the dexterity map.
  }
	\label{fig:serviceringsim}
\end{figure*}

Fig. \ref{fig:workspace} shows the workspaces of a two-segment TDR, a three-segment TDR, and an RTR.
The length of each link of the simulated robots is set to $1 cm$, and the angular range of each joint is set to [$-54^\circ,54^\circ$]. 
The simulation results indicate that the workspace of the three-segment TDR encompasses that of the two-segment TDR, while the RTR exhibits a larger workspace than both.
Theoretically, the RTR can achieve the largest workspace among the TDRs with the same structures.
Herein, a lemma is introduced to show the characteristics of the workspace of RTR.
\begin{mylemma}
     With the same structure and actuation module, RTR has a larger reachable workspace than that of the traditional TDR.
\end{mylemma}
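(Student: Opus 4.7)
The plan is to reduce the claim to a set-inclusion argument: characterize the reachable workspaces of both robots as sets of end-effector positions induced by piecewise constant-curvature shapes, then show the TDR's set sits inside the RTR's set, and finally exhibit shapes the RTR attains that no TDR with the same hardware can. The earlier simulation result (Table \ref{tab:idealRTRsimulation}) that free joints within a contiguous unlocked segment all rotate to a common angle is the key structural fact; it certifies that the RTR's time-phased strategy produces exactly piecewise constant-curvature shapes whose segment boundaries are whichever joints happen to be locked.

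First, I would fix notation: let the shared structure consist of $N$ joints with identical link lengths and joint limits, and let the TDR be an arbitrary fixed segmentation $\mathcal{S} = \{S_1,\dots,S_m\}$ of the joints into independently actuated groups. Denote its reachable set $\mathcal{W}_{\text{TDR}}(\mathcal{S})$ as the image of the forward map over admissible per-segment curvatures. The RTR's reachable set $\mathcal{W}_{\text{RTR}}$ is the union, over all lock-pattern sequences and admissible actuations, of the forward-kinematic images of the resulting joint-angle vectors.

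Second, I would prove $\mathcal{W}_{\text{TDR}}(\mathcal{S})\subseteq \mathcal{W}_{\text{RTR}}$ by a constructive simulation argument. Given any target TDR posture with per-segment curvatures $\kappa_1,\dots,\kappa_m$, I would describe the following sequence: lock every joint outside $S_1$, drive the free group to the angle $\theta_i$ that the constant-curvature model (eqs. (\ref{eq:theta})--(\ref{eq:a})) associates with $\kappa_1$, lock $S_1$, then repeat for $S_2,\dots,S_m$ after unlocking each in turn. Using the static-model propagation already derived in the paper, I would verify that previously locked joints hold their angles (they are rigid by the dead-point property), so the shape assembled at the end is precisely the target posture. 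Thus every point of $\mathcal{W}_{\text{TDR}}(\mathcal{S})$ is a point of $\mathcal{W}_{\text{RTR}}$.

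Third, I would establish strict inclusion by producing, for any fixed $\mathcal{S}$, a posture in $\mathcal{W}_{\text{RTR}}\setminus \mathcal{W}_{\text{TDR}}(\mathcal{S})$. It suffices to choose a lock pattern whose free-joint blocks do not coincide with $\mathcal{S}$; for example, leave a single joint free in an interior position where $\mathcal{S}$ demands that joint rotate in lockstep with neighbors, bend it sharply, and lock it. The resulting end-effector position violates the constant-curvature constraint on the segment of $\mathcal{S}$ that contained those joints, so it cannot lie in $\mathcal{W}_{\text{TDR}}(\mathcal{S})$. The example already visible in Fig.~\ref{fig:workspace}, where mint-green points extend beyond both the red and blue clouds, can be cited as a concrete realization.

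The main obstacle I anticipate is giving a clean, unambiguous meaning to ``same actuation module,'' since the RTR replaces per-segment tendon sets with one global set plus two locking motors; I would address this by phrasing the comparison as ``for every TDR built from the same $N$ links under any fixed segmentation consistent with the motor count, the inclusion holds,'' which is the strongest form the figure and Table~\ref{lab: benchmark} seem to assert. A secondary technical point is confirming that the locking transitions do not perturb the already-set joint angles; here I would appeal to the paper's remark that the locking force is negligible relative to the driving-tendon tensions, so the static equilibrium encoded in the per-joint torque balance is preserved across lock events.
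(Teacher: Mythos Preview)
Your proposal is correct, but it takes a genuinely different route from the paper's argument. The paper does not simulate a given TDR segmentation step by step; instead it argues directly that the RTR, by unlocking one joint at a time, can set each joint angle independently, so its reachable workspace equals that of a fully actuated $N$-joint serial arm:
\[
\mathcal{W}_{\mathrm{RTR}}=\bigcup_{\theta_1,\dots,\theta_{N-1}}{}^{0}_{1}\mathbf{T}(\theta_1)\cdots{}^{N-2}_{N-1}\mathbf{T}(\theta_{N-1})\,\mathcal{P}.
\]
It then observes that a (single-segment) TDR under constant curvature is the diagonal slice $\theta_1=\cdots=\theta_{N}=\theta$ of this parameter cube, hence $\mathcal{W}_{\mathrm{TDR}}\subsetneqq\mathcal{W}_{\mathrm{RTR}}$. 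What this buys is brevity and a stronger intermediate fact (RTR $=$ full serial-manipulator workspace) from which the inclusion for \emph{any} segmentation follows immediately, since any segmentation merely constrains some coordinates of the cube to be equal. What your approach buys is that it handles the multi-segment case explicitly (the paper's written proof only treats one segment), it actually exhibits a witness for the strict inclusion rather than just asserting $\subsetneqq$, and it engages the practical issues---lock-transition invariance, the meaning of ``same actuation module''---that the paper's proof glosses over. Either argument would be acceptable; yours is more operational, the paper's more compact.
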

\begin{proof}
Without loss of generality, the body of an RTR or a traditional TDR can be treated as a $N$-joint serial manipulator with $N$ movable links of length of $L$.
Let a reference point $\mathbf{p} \in \mathbb{R}^3$ be the distal end, the reachable workspace is defined as the set of all reachable positions of the reference point.

Because each joint of the RTR is individually lockable, when only the final joint $N$ is rotating, the workspace of $\mathbf{p}$ about axis $N$, denoting as $\mathcal{W}_{N}$, is a point set distributing on an arc $\mathcal{P}$, which can be written as 
\begin{equation}
     \mathcal{W}_{N} = \mathcal{P} = \left\{\mathbf{p}\left\vert 
     \mathbf{p}=
     \begin{bmatrix}
         L\cos\theta_N \\
         L\sin\theta_N \\
         0
     \end{bmatrix}\right.,
      \theta_N \in [\theta_{min}, \theta_{max}]
     \right\},
\end{equation}
where $\left[\theta_{min}, \theta_{max}\right]$ denotes the motion range of each joint.

When extending to the $(N-1)^{th}$ joints, the new workspace $\mathcal{W}_{N-1}$ can be obtained by rotating $\mathcal{W}_{N}$ around axis $N-1$, which is the union of all possible point sets after a rigid motion defined by $\theta_{N-1}$: 
\begin{equation}
  \mathcal{W}_{N-1} = \bigcup_{\theta_{N-1}} {}^{N-2}_{N-1}\mathbf{T}(\theta_{N-1}) \; \mathcal{W}_{N} \,,
\end{equation}
where ${}^{N-2}_{N-1}\mathbf{T}$ is the homogeneous transformation matrix of $N-1$ frame with respect to $N-2$ frame. 
Hereafter, we have implicitly enhanced each point to its homogeneous coordinate.

Sequentially, workspace about the first joint can be obtained iteratively, which equals the complete workspace of RTR as follows 
\begin{align}
    &\mathcal{W}_{RTR} = \mathcal{W}_{1} = \bigcup_{\theta_{1}} {}^{0}_{1}\mathbf{T}(\theta_{1}) \; \mathcal{W}_{2} \nonumber \\
    &= \bigcup_{\theta_{1}} {}^{0}_{1}\mathbf{T}(\theta_{1})\left( \bigcup_{\theta_{2}} {}^{1}_{2}\mathbf{T}(\theta_{2}) \; \mathcal{W}_3\right) \nonumber \\
    & \qquad \vdots \nonumber \\
    &= \bigcup_{\theta_{1}} {}^{0}_{1}\mathbf{T}(\theta_{1})  \left(\bigcup_{\theta_{2}} {}^{1}_{2}\mathbf{T}(\theta_{2}) \cdots
    \left(\bigcup_{\theta_{N-1}} {}^{N-2}_{N-1}\mathbf{T}(\theta_{N-1}) \; \mathcal{W}_{N}\right) \right) \nonumber\\
    &= \bigcup_{\theta_{1},\cdots,\theta_{N-1}} {}^{0}_{1}\mathbf{T}(\theta_{1}) \cdots {}^{N-2}_{N-1}\mathbf{T}(\theta_{N-1}) \; \mathcal{P}.
\end{align}

On the other hand, according to the constant curvature model, all joints in a traditional one-segment TDR have the same rotation angles ideally, and the workspace of the TDR can be written as 
\begin{align}
    &\mathcal{W}_{TDR} = {}^{0}_{1}\mathbf{T}(\theta) \cdots {}^{N-2}_{N-1}\mathbf{T}(\theta) \; \mathcal{P}^{\prime} \\
    &\mathcal{P}^{\prime}  = \left\{\mathbf{p}\left\vert 
     \mathbf{p}=
     \begin{bmatrix}
         L\cos\theta \\
         L\sin\theta \\
         0
     \end{bmatrix}\right.,
      \theta \in [\theta_{min}, \theta_{max}]
     \right\}.
\end{align}

Consequently, $\mathcal{W}_{TDR}$ is a subset of $\mathcal{W}_{RTR}$ which can be written as
\begin{align}
    &\mathcal{W}_{TDR} \subsetneqq \mathcal{W}_{RTR}.
\end{align}
Therefore, the RTR has a larger workspace than the traditional TDR.
It is worth noting that the workspace of the traditional TDR is only a special case of that of the RTR.

\end{proof}

\subsection{Dexterity Analysis}
To analyze the dexterity of RTR for any reachable point within the workspace, the service ball is used as an index of dexterity \cite{wu2016dexterity,badescu2004new}.
The robot can reach a point with multiple configurations, and the tangent lines of the end-effector intersect the service ball with multiple points, collectively forming a region noted as the service region.
Set one of the reachable points as $Q$, and the dexterity of spatial RTR about it can be defined as
\begin{align}
	D_s(Q) = \frac{A_R(Q)}{A_S} \times 100\%\,,
\end{align}
where $A_R(Q)$ is the area of service region, and the $A_S$ is the total surface area of the service ball.
For a planar RTR, the workspace is on a plane, and its service ball and region degrade to a ring and an arc, respectively.
The dexterity of the planar RTR about point $Q$ can be written as
\begin{align}
	D_p(Q) = \frac{\theta_R(Q)}{2\pi}\times 100\%\,,
\end{align}
where $\theta_R(Q)$ is the angle corresponding to the service arc.
Because service arc can show dexterity more intuitively, planar TDR and RTR are utilized in subsequent analyses.
In addition to comparing the dexterity between the RTR and the traditional TDR, the dexterity is also analyzed from two aspects: the length and number of robot segments.

\subsubsection{Dexterity Comparison} 
Simulations on the dexterity of a three-link two-segment TDR and a three-segment RTR around the same position have been carried out, the results can be seen in Fig. \ref{fig:serviceringsim}(a).
For the same targets, RTR can reach them by more postures in different directions, which corresponds to larger service arcs and indicates better dexterity.

\subsubsection{Effect of Segment Length} 
In addition, RTR can reconfigure the lengths of the robot segments to change the dexterity. 
Take the target point 3 shown in Fig. \ref{fig:serviceringsim}(a) as an example, after varying the segment lengths of three-segment RTR and calculating the corresponding $D_p$, a dexterity map can be drawn to show the relationship between the $D_p$ and the length of each segment, as shown in Fig. \ref{fig:serviceringsim}(b).
Here, the variable segment length is relative to the whole length of the robot, 2000 combinations of division lengths have been sampled, and the corresponding $D_p$ has been calculated.
In the map, a point appearing more yellow indicates that the RTR, with the corresponding combination of segments, can reach the target from a greater number of directions, whereas a more blue hue indicates fewer possible approach directions.
The map shows that about target 3, RTR can obtain the maximum dexterity when the proportion of the three segments is approximately 0.33:0.47:0.2. 
In Fig. \ref{fig:serviceringsim}(c), four examples of RTRs with different segment combinations are shown, and their $D_p$s about the target 3 varies from 3.209\% to 21.85\%. 

\subsubsection{Effect of Segment Number} 
Changing the number of robot segments by reconfiguration can also change the dexterity of RTR.
Continuing with the target 3 mentioned above, changing the number of robot segments from 3 to 6 while the total length of RTR remains unchanged, the corresponding maximum $D_p$ can be calculated. 
The results are shown in Table \ref{tab:serviceregion}, the maximum $D_p$ of RTR can be promoted from 21.85\% to 55.72\%, 61.50\%, and 66.75\% by adding the number of segments from 3 to 4, 5, 6, respectively.
It demonstrates that the reconfiguration capability of RTR has a significant impact on its dexterity, allowing the same RTR to present various working performances.
 
\section{Experiment and Demonstration}
\subsection{Static Model Validation}
\begin{figure}[t]
    \centering
    \includegraphics[width=0.42\textwidth]{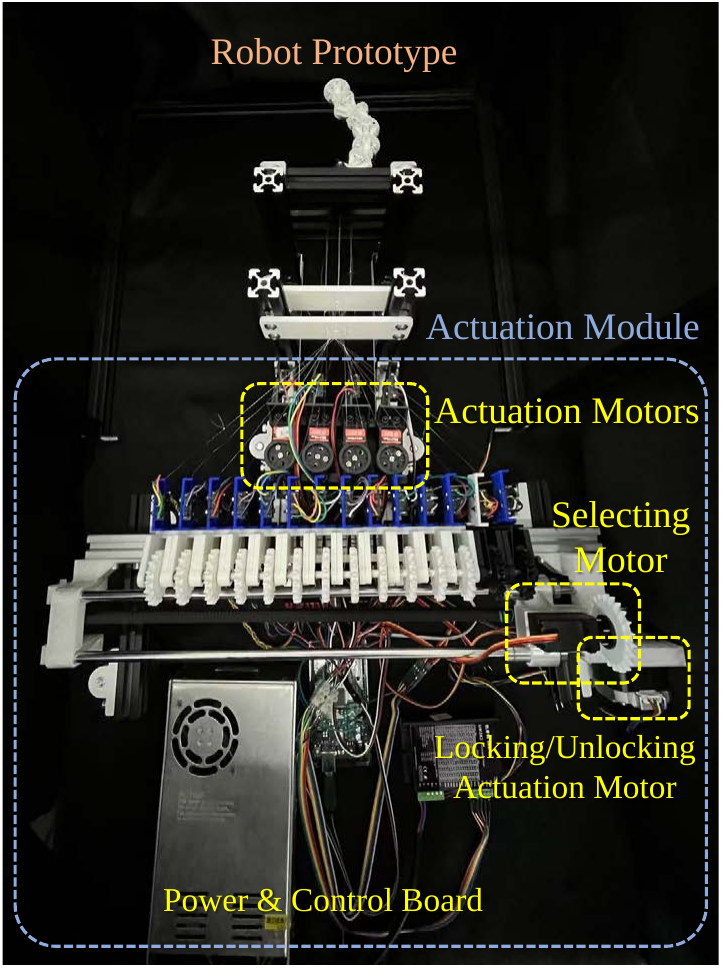}
    \caption{Overview of the seven-joint RTR system.
    The system consists of the robot prototype, the actuation module, and the power module.}
    \label{fig:robotsystem}
\end{figure}
\begin{figure}
    \centering
    \includegraphics[width=0.4\textwidth]{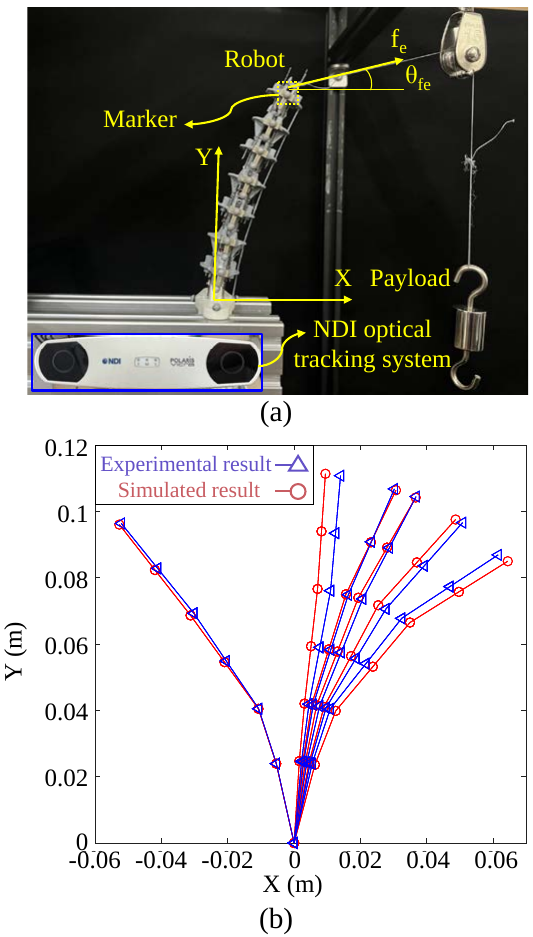}
    \caption{Experimental validation of the proposed static model of RTR.
    (a) Experimental setup. 
    The payload connects to the tip of the RTR prototype by cables and a fixed pulley.
    $\theta_{\mathbf{f}_e}$ is angle between the cable and x-axis.
    Markers are placed on each link, and an optical tracking system (NDI Polaris) is used to record the position data.			
    (b) The comparison between simulation and experimental data.}
    \label{fig:staticsexp}
\end{figure}
The proposed static model of RTR is validated by experiments in this section, and the experimental setup is shown in Fig. \ref{fig:staticsexp} (a).
To enhance the consistency of the static model with real-world situations, the frictional force between the driving tendon and link can be calculated in an exponential form \cite{huang2018statics} as
    \begin{align}
        ^{n}\mathbf{f} =& ^{n+1}\mathbf{f}\exp{\left(\mu \theta_{n+1}\right)}\,,
    \end{align}
where $\mu$ is the coefficient of friction.

\begin{table}[b]
	\centering 
	\caption{Error between simulation and experiment of static model}
	\begin{tabular}{cccccc}
		\toprule
		 & & State & &Joint Angle Error ($^\circ$)\\
        \cmidrule(lr){2-4}
		Locked Joints & $\mathbf{f}_1$ (N) & $\mathbf{f}_2 (N)$ &$\mathbf{f}_{ex} $ (N) &(mean$\pm$std)\\
		\midrule
        None &$0$ &$0.98$ &$0$ & 0.5105 $\pm$ 7.057\\
        None &$0$ &$1.96 $ &$0$& 1.170 $\pm$ 6.790\\
		None &$ 0$ &$0$ &$0.0196$& 0.1781 $\pm$ 5.973\\
        None &$0.98$& $ 0 $ &$0.0196$& 0.4135 $\pm$ 13.15\\
        1,2,3,4 &$0.98$ &$0 $ &$0.0196$& 1.038 $\pm$ 7.151\\
        1,2,3,4 &$0$ &$0.98$ &$0.0196$& 0.4214 $\pm$ 6.971\\
		\bottomrule
	\end{tabular}
	\label{tab:ExpStatics}
\end{table}

After exerting a payload on the distal end, the prototype begins to deform and the direction of the external force will vary from the original one.
However, in the model derivation, the direction of external force is fixed relative to the local frame of the distal end, which is contradictory to the actual situation.
Therefore, an optimization-based method has been proposed to find the steady state direction of the external force $\theta_{\mathbf{f}_e}$ and robot posture under external payload. 
The objective function $E_\theta$ is defined as the difference between the direction of external force and $\theta_{\mathbf{f}_e}$.
After setting an initial direction of the external force, the steady state posture of the robot can be found iteratively by minimizing the $E_\theta$.

Simulation and experimental results are compared in Table \ref{tab:ExpStatics}, with different locking strategies, pulling forces ($\mathbf{f}_1,\mathbf{f}_2$), and external forces ($\mathbf{f}_{ex}$). 
The calibrated $\mu$ was found to be 0.085 by using the experimental data from the robot without any locked joint or external force.
Posture errors between the experimental and simulated data are shown in Fig. \ref{fig:staticsexp} (b) and Table \ref{tab:ExpStatics}.
The results indicate that the model is accurate with the worst mean error and its standard deviation of all joints less than 0.064 and 0.035 degrees, respectively.

\subsection{RTR Working Performance Demonstration}
\begin{figure*}
    \centering
    \includegraphics[width=0.95\textwidth]{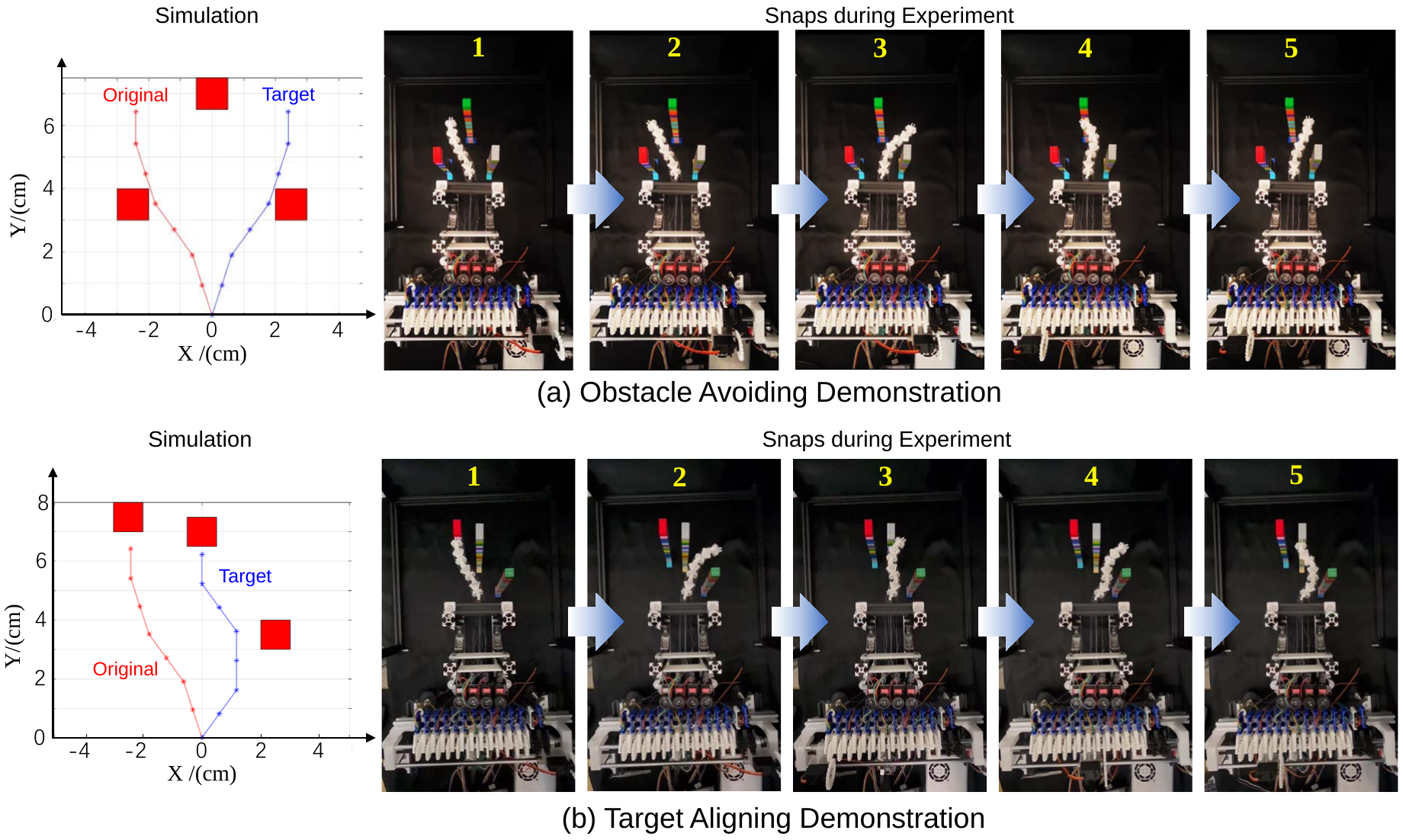}
    \caption{Demonstration of the RTR prototype working in the complex environment.
    Simulated trajectories can be seen on the left.
    (a) Obstacle avoidance demonstration.
    The RTR prototype first contracts its body, then swings across the obstacle, and finally extends its body to achieve the target posture.
    (b) Target aligning demonstration.
    The robot continuously reconfigures itself and moves to avoid obstacles and ultimately aligns its end effector with the target.}
    \label{fig:PerfomanceDemo}
\end{figure*}

To validate the capability of the RTR to work in complex environments under the control of the actuation module by leveraging its reconfigurability and time-phased actuation strategy, two verification experiments are designed and conducted.
During the experiments, RTR is required to safely transition from the original pose to the target pose in a space with obstacles.
The simulation and experimental results can be seen in Fig. \ref{fig:PerfomanceDemo}.
Users can reconfigure the RTR in each step according to task requirements, enabling the robot to perform complex motions that traditional TDRs require additional DoF to accomplish.
Through actions such as contraction, swinging, and extension, the RTR can easily perform tasks such as obstacle avoidance and target alignment, demonstrating its feasibility in working within complex environments.

\section{Discussion}
Besides kinematics, workspace, stiffness, and motion planning, there are still many factors that should be discussed before applying RTR to many practical tasks. 
Herein, we discuss other important aspects.

\begin{figure}[t]
\centering
\includegraphics[width = 0.48\textwidth]{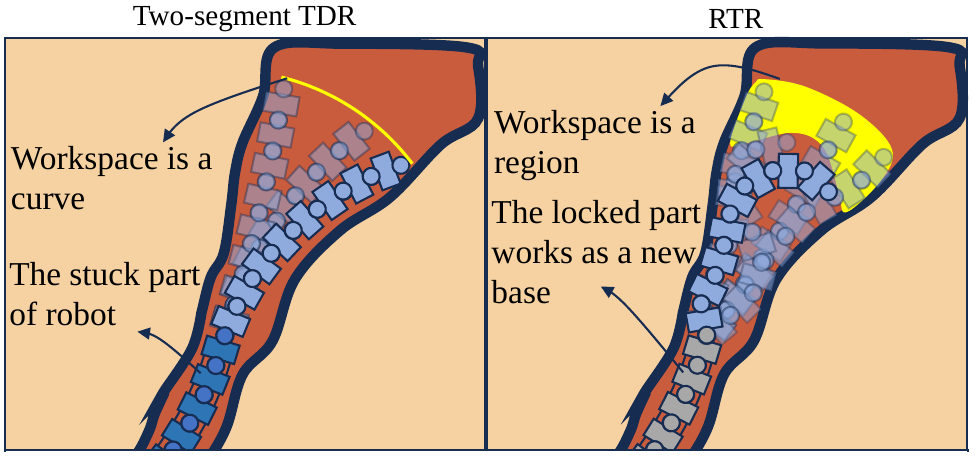}
\caption{Schematic diagram of RTR and underactuated TDR
working in confined space.
RTR can reach some positions that TDR with the same amount of control motors cannot reach.
Although the robot is in a confined space, RTR can still have a region workspace, while TDR just has a curved workspace as some robot segments have been stuck by the torturous environment.} 
\label{fig:workinconfinedspace}
\end{figure}

\subsection{Application in Confined Spaces}
As shown in Fig. \ref{fig:workinconfinedspace}, a typical MIS application scenario is used to demonstrate the superiority of RTR in workspace and dexterity.
During an MIS procedure, clinicians usually require a robot to pass through the complex environment with the end-effector being dexterous.
However, work in confined environments like the intestine, pharynx, or sinus is difficult for TDR \cite{yang2023magnetically,hong2020magnetic}.
Although multi-segment TDR seems to be very dexterous, many segments would be stuck by the tortuous incision path.
When the end-effector finally reaches the lesion, the robot remains only limited DoFs for operation, which results in limited workspace and dexterity.

Thanks to its reconfigurability, in comparison, RTR is immune from the aforementioned problem.
When RTR moves along the same tortuous path to reach the lesion, we can lock the joints that far from the lesion as a new base and actuate the rest, which results in a larger workspace with higher dexterity around the lesion.

\subsection{Scaling}
\begin{figure}[t]
\centering
\includegraphics[width=0.48\textwidth]{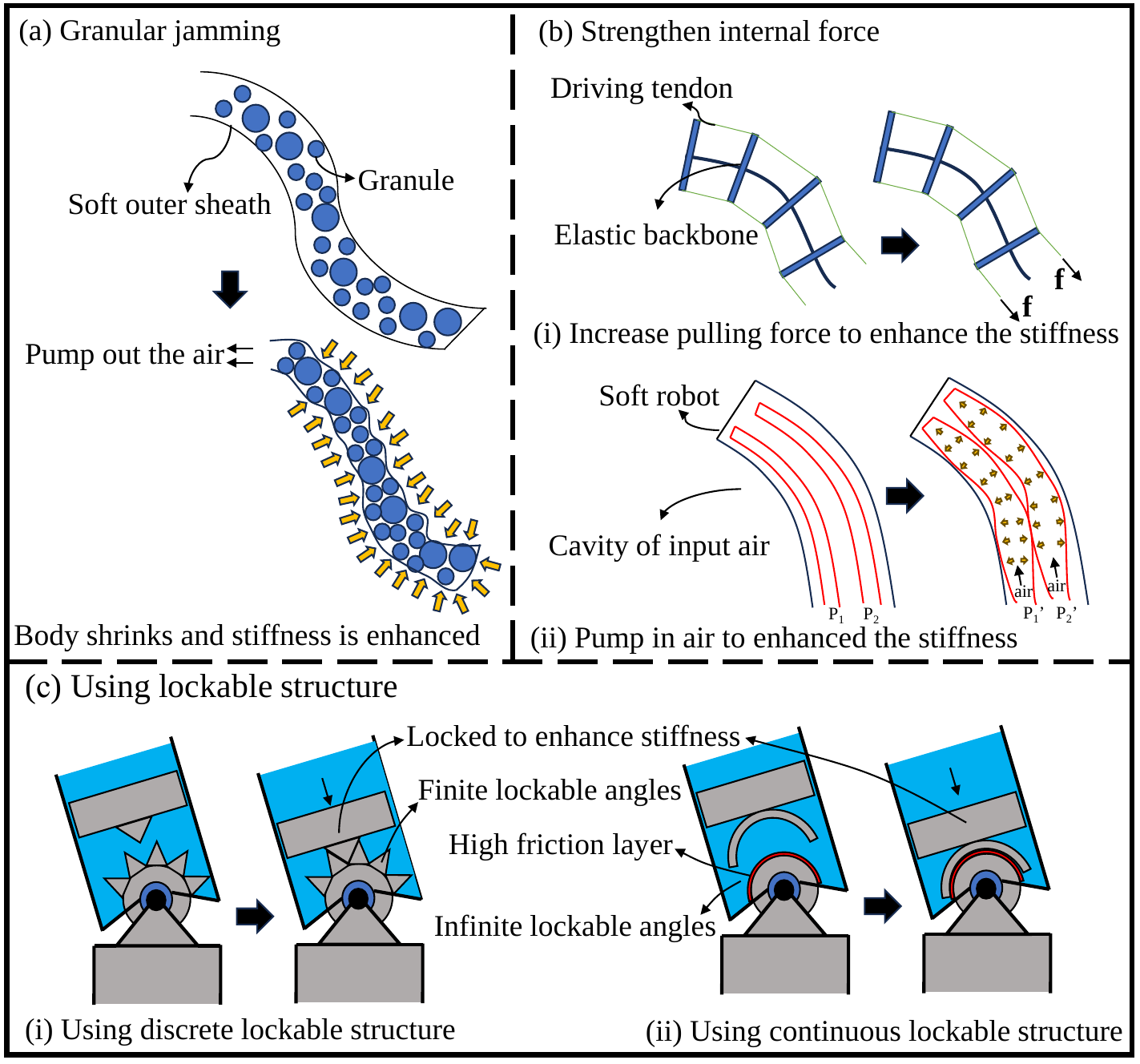}
\caption{Principles of varying stiffness. 
    (a) Robots with granules inside can use a pump to create a vacuum, the jamming of granules would be caused by the atmospheric pressure, and the robot's stiffness is enhanced.
    (b) Robots controlled by driving tendons or inputting fluid or gas can enhance their stiffness by enlarging the internal forces.
    (c) Robots can use lockable structures to lock parts of the robot to enhance their stiffness.
    The lockable structures can be divided into two types:  discrete lockable and continuous lockable.}
\label{fig:introducevariablestiffness}
\end{figure}
Scaling is the ability to scale up or down the size of the manipulator, which will be considered while customizing RTR for specific tasks \cite{li2017kinematic}.
In most situations, the lockable structure cannot be linearly scaled, it may be better to change the type of lockable structure rather than just scaling it.
For small-size RTR, the cross-section cannot contain bulky structures, thereby methods that do not need much space yet offer enough force could be adopted, such as shape memory alloy (SMA), phase change material like wax, etc.
For large-size RTR there is more volume for lockable structures to be implanted.
Methods that offer great locking performance can be chosen, such as using worm gear and brake mechanisms, etc.
Some schematic of lockable structure's principles are shown in Fig. \ref{fig:introducevariablestiffness}.

For RTR's outstanding ability to work in small narrow environments, investigating smaller lockable structures does great help to realize the potential of RTR.
Many materials and structures have the potential to be used in lockable structures, like dielectric material, magnetorheological fluid, origami structures, and so on.

\subsection{Safety}
The safety of RTR is an essential issue that should be paid attention to.
Unexpected failure of locking would cause undesired movements of the robot, which may cause danger to the user and environment.
In addition, the working principles of some lockable structures may affect the environment.
Taking the SMA as an instance, SMA requires heating to deform, which may lead to high temperature and the risk of scald.

To avoid these sudden risks, practitioners should understand the physical and chemical properties of materials used in the lockable structures, and validation experiments of the structure should be carried out.
Furthermore, accurate static and kinematic models should be introduced, which can guide practitioners in manipulating the robots safely.
\section{Conclusion}
This paper presents the design and working principle of RTR, establishes its forward kinematic and static models, and evaluates its advantages in workspace and dexterity compared with traditional TDRs.
A seven-joint RTR prototype has been built, along with a compact actuation module with only six motors.
Leveraging a time-phased actuation strategy and reconfigurability, RTR surpasses traditional TDRs in dexterity while fundamentally eliminating intersegmental coupling during movements, significantly reducing the complexity of modeling and control.
Experiments were conducted to validate the static model of the robot, and the position error between the experiment and simulation was $0.064\pm0.035$ degrees (mean$\pm$standard).
Furthermore, verification experiments demonstrated the feasibility of the RTR prototype in complex working environments.
Through a sequence of contraction, swinging, and stretching motions, RTR successfully performs obstacle avoidance and successfully transitions into the target posture.

However, further research is required to enhance the performance of RTR and better understand its limitations.
In particular, the development of smaller and lighter lockable joint structures, as well as efficient methods for rapid kinetostatic computation, remains a key challenge.
Additionally, since the motion strategy of RTR differs from that of traditional TDRs, dedicated motion planning algorithms must be developed.
Moreover, a deeper understanding of the relationship between joint locking strategies and the robot's stiffness is essential, which will facilitate the application of RTR in more scenarios, such as organ support in MIS and structural reinforcement for debris in disaster sites, etc.
Although research on RTR is still in its early stages, we believe that it will emerge as a highly promising solution for certain applications in the future.
\bibliographystyle{IEEEtran}
\bibliography{ref}

\begin{thebibliography}{10}
\providecommand{\url}[1]{#1}
\csname url@samestyle\endcsname
\providecommand{\newblock}{\relax}
\providecommand{\bibinfo}[2]{#2}
\providecommand{\BIBentrySTDinterwordspacing}{\spaceskip=0pt\relax}
\providecommand{\BIBentryALTinterwordstretchfactor}{4}
\providecommand{\BIBentryALTinterwordspacing}{\spaceskip=\fontdimen2\font plus
\BIBentryALTinterwordstretchfactor\fontdimen3\font minus \fontdimen4\font\relax}
\providecommand{\BIBforeignlanguage}[2]{{%
\expandafter\ifx\csname l@#1\endcsname\relax
\typeout{** WARNING: IEEEtran.bst: No hyphenation pattern has been}%
\typeout{** loaded for the language `#1'. Using the pattern for}%
\typeout{** the default language instead.}%
\else
\language=\csname l@#1\endcsname
\fi
#2}}
\providecommand{\BIBdecl}{\relax}
\BIBdecl

\bibitem{lastinger2020variable}
M.~C. Lastinger, A.~Rice, A.~Kapadia, and I.~D. Walker, ``Variable topology “tree-like” continuum robots for remote inspection and cleaning,'' in \emph{2020 IEEE Aerospace Conference}.\hskip 1em plus 0.5em minus 0.4em\relax IEEE, 2020, pp. 1--10.

\bibitem{wooten2018exploration}
M.~Wooten, C.~Frazelle, I.~D. Walker, A.~Kapadia, and J.~H. Lee, ``Exploration and inspection with vine-inspired continuum robots,'' in \emph{2018 IEEE International conference on robotics and automation (ICRA)}.\hskip 1em plus 0.5em minus 0.4em\relax IEEE, 2018, pp. 5526--5533.

\bibitem{dupont2022continuum}
P.~E. Dupont, N.~Simaan, H.~Choset, and C.~Rucker, ``Continuum robots for medical interventions,'' \emph{Proceedings of the IEEE}, vol. 110, no.~7, pp. 847--870, 2022.

\bibitem{kim2019ferromagnetic}
Y.~Kim, G.~A. Parada, S.~Liu, and X.~Zhao, ``Ferromagnetic soft continuum robots,'' \emph{Science Robotics}, vol.~4, no.~33, p. eaax7329, 2019.

\bibitem{wang2019steering}
J.~Wang, J.~Ha, and P.~E. Dupont, ``Steering a multi-armed robotic sheath using eccentric precurved tubes,'' in \emph{2019 international conference on robotics and automation (ICRA)}.\hskip 1em plus 0.5em minus 0.4em\relax IEEE, 2019, pp. 9834--9840.

\bibitem{song2021real}
S.~Song, H.~Ge, J.~Wang, and M.~Q.-H. Meng, ``Real-time multi-object magnetic tracking for multi-arm continuum robots,'' \emph{IEEE Transactions on Instrumentation and Measurement}, vol.~70, pp. 1--9, 2021.

\bibitem{wang2021eccentric}
J.~Wang, J.~Peine, and P.~E. Dupont, ``Eccentric tube robots as multiarmed steerable sheaths,'' \emph{IEEE Transactions on Robotics}, vol.~38, no.~1, pp. 477--490, 2021.

\bibitem{kong2022dexterity}
Y.~Kong, J.~Wang, N.~Zhang, S.~Song, and B.~Li, ``Dexterity analysis and motion optimization of in-situ torsionally-steerable flexible surgical robots,'' \emph{IEEE Robotics and Automation Letters}, vol.~7, no.~3, pp. 8347--8354, 2022.

\bibitem{endo2019super}
G.~Endo, A.~Horigome, and A.~Takata, ``Super dragon: A 10-m-long-coupled tendon-driven articulated manipulator,'' \emph{IEEE Robotics and Automation Letters}, vol.~4, no.~2, pp. 934--941, 2019.

\bibitem{9420666}
F.~Janabi-Sharifi, A.~Jalali, and I.~D. Walker, ``Cosserat rod-based dynamic modeling of tendon-driven continuum robots: A tutorial,'' \emph{IEEE Access}, vol.~9, pp. 68\,703--68\,719, 2021.

\bibitem{xu2018kinematics}
W.~Xu, T.~Liu, and Y.~Li, ``Kinematics, dynamics, and control of a cable-driven hyper-redundant manipulator,'' \emph{IEEE/ASME Transactions on Mechatronics}, vol.~23, no.~4, pp. 1693--1704, 2018.

\bibitem{zhang2022survey}
J.~Zhang, Q.~Fang, P.~Xiang, D.~Sun, Y.~Xue, R.~Jin, K.~Qiu, R.~Xiong, Y.~Wang, and H.~Lu, ``A survey on design, actuation, modeling, and control of continuum robot,'' \emph{Cyborg and Bionic Systems}, 2022.

\bibitem{202200367ContinuumRobotsAnOverview}
M.~Russo, S.~M.~H. Sadati, X.~Dong, A.~Mohammad, I.~D. Walker, C.~Bergeles, K.~Xu, and D.~A. Axinte, ``Continuum robots: An overview,'' \emph{Advanced Intelligent Systems}, vol.~5, no.~5, p. 2200367, 2023.

\bibitem{10606060}
M.~Russo, S.~Wild, X.~Dong, and D.~Axinte, ``Helical routing: Decoupling segments of tendon-driven continuum robots,'' \emph{IEEE/ASME Transactions on Mechatronics}, pp. 1--13, 2024.

\bibitem{wang2019cable}
Y.~Wang, C.~Song, T.~Zheng, D.~Lau, K.~Yang, and G.~Yang, ``Cable routing design and performance evaluation for multi-link cable-driven robots with minimal number of actuating cables,'' \emph{IEEE Access}, vol.~7, pp. 135\,790--135\,800, 2019.

\bibitem{jiang2017new}
Z.~Jiang, Y.~Luo, and Y.~Jin, ``New cable-driven continuun robot with only one actuator,'' in \emph{2017 IEEE International Conference on Cybernetics and Intelligent Systems (CIS) and IEEE Conference on Robotics, Automation and Mechatronics (RAM)}.\hskip 1em plus 0.5em minus 0.4em\relax IEEE, 2017, pp. 693--698.

\bibitem{bishop2022novel}
C.~Bishop, M.~Russo, X.~Dong, and D.~Axinte, ``A novel underactuated continuum robot with shape memory alloy clutches,'' \emph{IEEE/ASME Transactions on Mechatronics}, vol.~27, no.~6, pp. 5339--5350, 2022.

\bibitem{wockenfuss2022design}
W.~R. Wockenfu{\ss}, V.~Brandt, L.~Weisheit, and W.-G. Drossel, ``Design, modeling and validation of a tendon-driven soft continuum robot for planar motion based on variable stiffness structures,'' \emph{IEEE Robotics and Automation Letters}, vol.~7, no.~2, pp. 3985--3991, 2022.

\bibitem{le2020temperature}
H.~M. Le, P.~T. Phan, C.~Lin, L.~Jiajun, and S.~J. Phee, ``A temperature-dependent, variable-stiffness endoscopic robotic manipulator with active heating and cooling,'' \emph{Annals of Biomedical Engineering}, vol.~48, pp. 1837--1849, 2020.

\bibitem{firouzeh2015under}
A.~Firouzeh, S.~S.~M. Salehian, A.~Billard, and J.~Paik, ``An under actuated robotic arm with adjustable stiffness shape memory polymer joints,'' in \emph{2015 IEEE International Conference on Robotics and Automation (ICRA)}.\hskip 1em plus 0.5em minus 0.4em\relax IEEE, 2015, pp. 2536--2543.

\bibitem{yan2021towards}
K.~Yan, W.~Yan, W.~Zeng, Q.~Ding, J.~Chen, J.~Yan, C.~P. Lam, S.~Wan, and S.~S. Cheng, ``Towards a wristed percutaneous robot with variable stiffness for pericardiocentesis,'' \emph{IEEE Robotics and Automation Letters}, vol.~6, no.~2, pp. 2993--3000, 2021.

\bibitem{botao2022lockable}
B.~Lin, J.~Wang, S.~Song, B.~Li, and M.~Q.-H. Meng, ``A modular lockable mechanism for tendon-driven robots: Design, modeling and characterization,'' \emph{IEEE Robotics and Automation Letters}, vol.~7, no.~2, pp. 2023--2030, 2022.

\bibitem{btLinRobio}
B.~Lin, T.~Zhang, S.~Song, and J.~Wang, ``Kinematic and static analyses of tendon-driven surgical robots with lockable joints,'' in \emph{2022 IEEE International Conference on Robotics and Biomimetics (ROBIO)}, 2022, pp. 259--265.

\bibitem{dalvand2018analytical}
M.~M. Dalvand, S.~Nahavandi, and R.~D. Howe, ``An analytical loading model for $ n $-tendon continuum robots,'' \emph{IEEE Transactions on Robotics}, vol.~34, no.~5, pp. 1215--1225, 2018.

\bibitem{wu2016dexterity}
L.~Wu, R.~Crawford, and J.~Roberts, ``Dexterity analysis of three 6-dof continuum robots combining concentric tube mechanisms and cable-driven mechanisms,'' \emph{IEEE Robotics and Automation Letters}, vol.~2, no.~2, pp. 514--521, 2016.

\bibitem{badescu2004new}
M.~Badescu and C.~Mavroidis, ``New performance indices and workspace analysis of reconfigurable hyper-redundant robotic arms,'' \emph{The International Journal of Robotics Research}, vol.~23, no.~6, pp. 643--659, 2004.

\bibitem{huang2018statics}
S.~Huang, D.~Meng, Y.~She, X.~Wang, B.~Liang, and B.~Yuan, ``Statics of continuum space manipulators with nonconstant curvature via pseudorigid-body 3r model,'' \emph{IEEE Access}, vol.~6, pp. 70\,854--70\,865, 2018.

\bibitem{yang2023magnetically}
Z.~Yang, H.~Yang, Y.~Cao, Y.~Cui, and L.~Zhang, ``Magnetically actuated continuum medical robots: A review,'' \emph{Advanced Intelligent Systems}, p. 2200416, 2023.

\bibitem{hong2020magnetic}
A.~Hong, A.~J. Petruska, A.~Zemmar, and B.~J. Nelson, ``Magnetic control of a flexible needle in neurosurgery,'' \emph{IEEE Transactions on Biomedical Engineering}, vol.~68, no.~2, pp. 616--627, 2020.

\bibitem{li2017kinematic}
Z.~Li, L.~Wu, H.~Ren, and H.~Yu, ``Kinematic comparison of surgical tendon-driven manipulators and concentric tube manipulators,'' \emph{Mechanism and machine theory}, vol. 107, pp. 148--165, 2017.

\end{thebibliography}

\newpage


\vspace{-11 mm}
\begin{IEEEbiography}[
	{\includegraphics[width=1in,height=1.25in,clip,keepaspectratio]{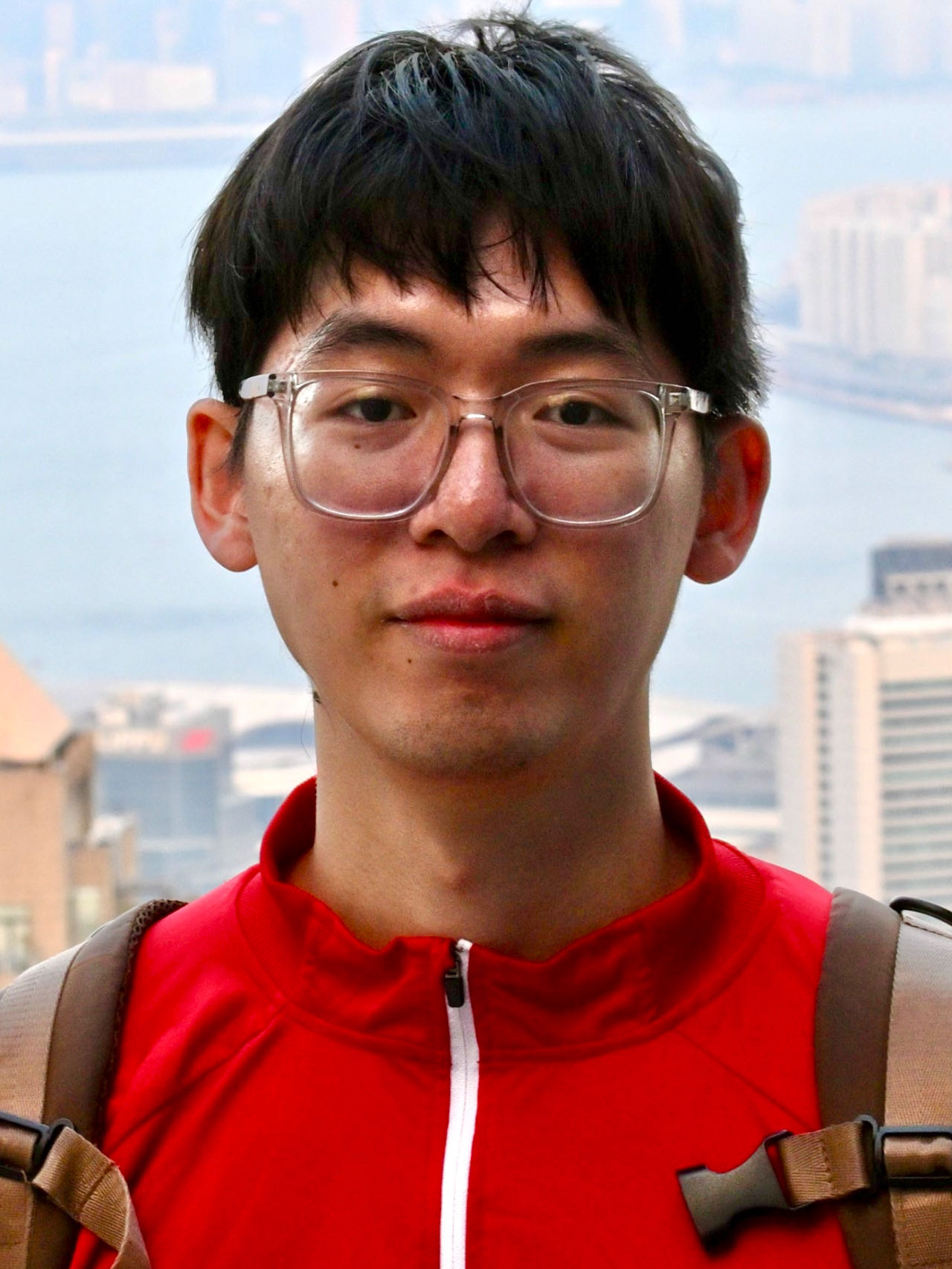}}
	]{Botao Lin} received the B.E. degree in Mechanical Design Manufacturing and Automation and M.E. degree in Mechanical Engineering from the School of Mechanical Engineering and Automation, Harbin Institute of Technology, Shenzhen, in 2022 and 2024, respectively. 
    He is currently pursuing a Ph.D. degree from the Department of Electronic Engineering, The Chinese University of Hong Kong (CUHK), Hong Kong.
    His main research interests include continuum surgical robotics and variable-stiffness robotics.
\end{IEEEbiography}

\vspace{-11 mm}
\begin{IEEEbiography}[
	{\includegraphics[width=1in,height=1.25in,clip,keepaspectratio]{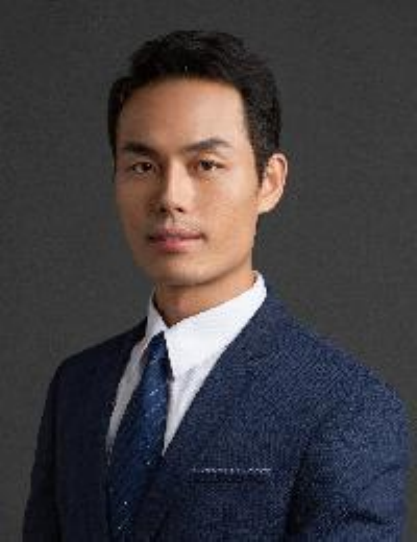}}
	]{Shuang Song} (Member, IEEE) received B.S. degree in Computer Sc. \& Tech. from North Power Electric University, M.S. degrees in Computer Architecture from Chinese Academy of Sciences, and his Ph.D. degree in Computer Application Tech. from University of Chinese Academy of Sciences, China, in 2007, 2010 and 2013, respectively. Now he is an Professor in Harbin Institute of Technology, Shenzhen, China. His main research interests include magnetic tracking and actuation for Bioengineering applications.
\end{IEEEbiography}

\vspace{-11 mm}
\begin{IEEEbiography}[
	{\includegraphics[width=1in,height=1.25in,clip,keepaspectratio]{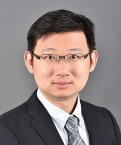}}
	]{Jiaole Wang} (Member, IEEE) received the B.E. degree in mechanical engineering from Beijing Information Science and Technology University, Beijing, China, in 2007, the M.E. degree from the Department of Human and Artificial Intelligent Systems, University of Fukui, Fukui, Japan, in 2010, and the Ph.D. degree from the Department of Electronic Engineering, The Chinese University of Hong Kong (CUHK), Hong Kong, in 2016. He was a Research Fellow with the Pediatric Cardiac Bioengineering Laboratory, Department of Cardiovascular Surgery, Boston Children’s Hospital and Harvard Medical School, Boston, MA, USA. He is currently an Associate Professor with the School of Mechanical Engineering and Automation, Harbin Institute of Technology, Shenzhen, China. His main research interests include medical and surgical robotics, image-guided surgery, human-robot interaction, and magnetic tracking and actuation for biomedical applications.
\end{IEEEbiography}

\vfill

\end{document}